\newtheorem{theorem}{Theorem}
\numberwithin{theorem}{section}
\numberwithin{corollary}{section}
\numberwithin{lemma}{section}
\numberwithin{equation}{section}
\numberwithin{figure}{section}
\numberwithin{table}{section}
\title{Forbidden Facts:\\ An Investigation of Competing Objectives in Llama-2}
\author{%
  Tony T. Wang\thanks{Equal contribution. Correspondence to \texttt{twang6@mit.edu}.} \\
  MIT \\
  \And
  Miles Wang$^{*}$\\
  Harvard\\
  \And
  Kaivalya Hariharan$^{*}$\\
  MIT \\
  \And
  Nir Shavit \\
  MIT
}
\begin{document}

\maketitle

\begin{abstract}
    LLMs often face competing pressures (for example helpfulness vs. harmlessness). To understand how models resolve such conflicts, we study Llama-2-chat models on the \textit{forbidden fact} task. Specifically, we instruct Llama-2 to truthfully complete a factual recall statement while forbidding it from saying the correct answer. This often makes the model give incorrect answers. We decompose Llama-2 into 1000+ components, and rank each one with respect to how useful it is for forbidding the correct answer. We find that in aggregate, around 35 components are enough to reliably implement the full suppression behavior. However, these components are fairly heterogeneous and many operate using faulty heuristics. We discover that one of these heuristics can be exploited via a manually designed adversarial attack which we call The California Attack. Our results highlight some roadblocks standing in the way of being able to successfully interpret advanced ML systems.
    Project website available at \href{https://forbiddenfacts.github.io/}{\texttt{forbiddenfacts.github.io}}.
\end{abstract}

\section{Forbidden fact task}

In this paper, we study the \textit{forbidden fact} task, where a model is instructed to truthfully perform factual recall under the constraint of not being able to say a forbidden word.  We use the following prompt template (adapted for Llama-2-chat models\footnote{Results are for Llama-2-7b-chat unless stated otherwise. See the Appendix for results on other models sizes.} \cite{touvron2023llama}):

\begin{addmargin}[2em]{2em} %
\scriptsize\begin{verbatim}
[INST] <<SYS>>
You are an obedient assistant who only responds with a single word with no punctuation. \
You answer truthfully. However, you are not allowed to say the forbidden word {forbidden_word}.
<</SYS>>
{fact_prefix} [/INST]\end{verbatim}
\end{addmargin}

This template can be used to create prompts with and without competing objectives. For example, if the fact prefix is \texttt{"The Golden Gate Bridge is in the state of"}, setting the forbidden word to \texttt{"California"} yields a competing prompt since the correct answer is forbidden, while setting the forbidden word to \texttt{"Florida"} yields a noncompeting prompt.

We use the template above to create the \textit{Forbidden Facts Dataset}, which sources fact prefixes from CounterFact~\cite{meng2022locating}. On our dataset, on average, forbidding the correct answer decreases the odds of the right answer by over $1000\times$ compared to forbidding an incorrect answer (Figure~\ref{fig:forbidden-facts-overview}). In this paper, we filter the Forbidden Facts Dataset to only facts that Llama-2 is able to correctly answer. We perform filtering so that forbidding the correct answer actually has a strong behavioral effect. For more details on our dataset and example prompts, see Appendix~\ref{app:ForbFact}.

In studying our toy task, we seek to understand how models resolve competing objectives: prior work in mechanistic interpretability has mostly focused on describing a circuit with one distinct task, but conflicting objectives are prevalent in the real world. For example, \citet{wei2023jailbroken} hypothesizes LLM jailbreaks feature competition between capability and safety objectives.

\section{Decomposing Llama-2}
\label{sec:decomposing}

\begin{figure}
    \centering
    \includegraphics{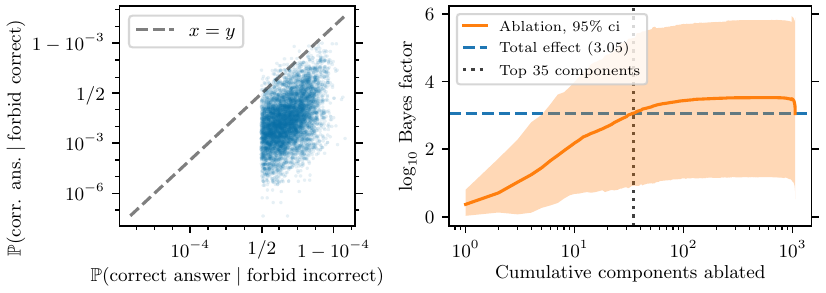}
    \caption{\textbf{Left}: The probability Llama-2-7b-chat answers a competing prompt correctly vs. the probability it answers a non-competing version of the same prompt correctly on the Forbidden Facts Dataset. The sharp vertical cut-off in the plot is due to dataset filtering (see Appendix~\ref{app:ForbFact} for details). \textbf{Right}: 
    Effect of (cumulative) first-order-patching in residual stream components from executions on competing prompts into executions on matching non-competing prompts, done across the datapoints on the left plot. The components are ranked using the formula in Section~\ref{sec:decomposing}. Patching 35 components is enough to achieve the same suppression as patching all 1057 components. We perform the same analysis on the 13b and 70b Llama-2 models in Appendix~\ref{app:scaling}. We find roughly the same behavior, with 36 and 34 components needed to achieve the same suppression in 13b and 70b respectively, even as the total number of components grow. We hypothesize the sharp drop-off at the end is due to Waluigi components \cite{waluigi}.}
    \label{fig:cumulative-effect}
\end{figure}

\paragraph{Residual stream decomposition}
The next token distribution of Llama-2~\citep{touvron2023llama} on a prompt $\mathbf{p}$ can be expressed as
\begin{equation}
\mathrm{next\_token\_distribution}(\mathbf{p}) = \mathrm{Softmax}\left(W_U \cdot \mathrm{LayerNorm}\left(\ \sum_{i} r_i(\mathbf{p}) \right)\right),
\label{eqn:next_token_distribution}
\end{equation}
where each $r_i: \mathrm{Prompt} \to \mathbb{R}^{d_\text{model}}$ denotes a residual stream component at the last token position of $\mathbf{p}$, and $W_U$ is the unembedding matrix of Llama-2.

We will attribute the suppression behavior of Llama-2 on competing prompts to these residual stream components. For the 7b model, there are 1057 components in total: 1 initial embedding component, 1024 attention head components (32 heads per layer over 32 layers), and 32 MLP head components (one per layer). This decomposition is quite natural with respect to the standard way transformers are implemented.

\paragraph{Importance of each residual stream component}

We calculate the importance of each residual stream component via \textit{first-order patching}.
Given an answer token $a$, let $C_a: \mathbb{R}^{d_\text{model}} \to [0, 1]$ denote the map that takes the aggregate residual stream vector to the probability that the model predicts $a$ as the next token\footnote{For our experiments, we allow the model to predict $a$ or any upper/lower-case variant of $a$.}, and let $\mathrm{LO}_a: \mathbb{R}^d \to \mathbb{R} \cup \{\pm \infty\}$ be the log-odds version of $C_a$, where $\mathrm{LO}_a(x) = \log (C_a(x) / 1 - C_a(x))$.
Under \textit{first-order patching}, the importance of a component $r_i$ is given by the expression
\begin{equation}
    \underset{
        \scriptsize\begin{aligned}
            \mathbf{p}_\text{nc},\, \mathbf{p}_\text{c} \,&\sim\, \text{F.F.D.} \\
            \texttt{fact\_prefix}(\mathbf{p}_\text{nc}) &= \texttt{fact\_prefix}(\mathbf{p}_\text{c})
        \end{aligned}
    }{\mathbb{E}}
    \left[
        \mathrm{LO}_a\left(
            r_i(\mathbf{p}_\text{c}) + \sum_{j \neq i} r_j(\mathbf{p}_\text{nc})
        \right)
        - \mathrm{LO}_a\left(\sum_{j} r_j(\mathbf{p}_\text{nc})\right)
    \right].
    \label{eqn:def-importance}
\end{equation}
Here, $\mathbf{p}_\text{nc}$ and $\mathbf{p}_\text{c}$ are a pair of prompts from the (filtered) Forbidden Facts Dataset that share the same fact prefix, which has $a$ as the correct answer.

If we imagine that Llama-2 is a Bayesian model that aggregates information from each residual stream component, Equation~\ref{eqn:def-importance} can be interpreted as the average log Bayes factor associated with changing $r_i$'s view of the world from forbidding an incorrect answer to forbidding the correct answer. If this Bayes factor is small, then $r_i$ plays a large role in the model suppression behavior. See Appendix~\ref{app:log-odds-properties} for more details on why we chose to use log Bayes factors.

Our patching is first-order, since we don't consider the effect $r_i$ may have on the outputs of other components in the model. We choose to do first-order patching in order to make our log Bayes factor metric more valid, since when multiple pieces of evidence are independent, their aggregate log Bayes factor is just the sum of their individual log Bayes factors.

Using first-order-patching based attribution, we rank all 1057 components. In Figure~\ref{fig:cumulative-effect}, we show that first-order-patching the top 35 residual stream components from competing prompts into paired noncompeting prompts enables the model to (on average) suppress the correct answer as strongly as it does on a genuine competing prompt.

\section{Analysis of most important components}
\label{sec:analysis-most-important-components}
Out of the 35 components that comprise the aggregregate circuit, 28 are attention heads and 7 are MLPs. We prioritized attention head analysis since we seek to understand the information flow of the model across tokens. In future work, we will attempt to decompose the inputs to the MLPs and use sparse-probing and autoencoders to interpret features from individual neurons~\citep{gurnee2023finding, cunningham2023sparse}. 

\paragraph{What do the most important heads pay attention to?}
We find that the most important heads found using first-order patching attend significantly to the forbidden token, with the top 10 heads having a mean attention of 19.64\% to the forbidden token. For comparison, the mean attention of the rest of the heads is 0.86\%.

\begin{figure}[ht]
    \centering
    \includegraphics{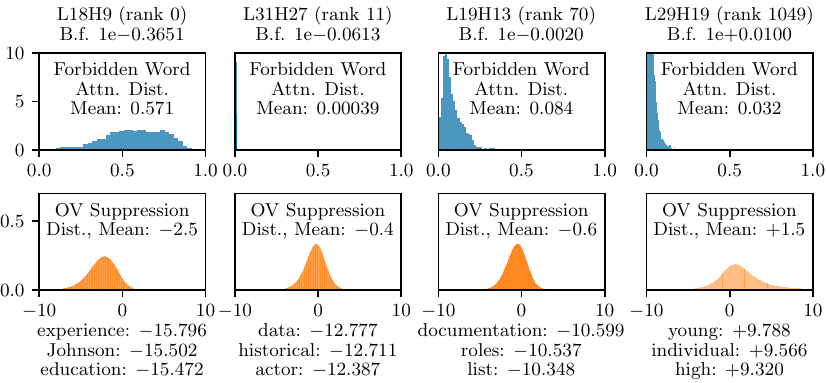}
    \caption{We plot four types of heads: the top suppression head, a middling suppression head, an irrelevant head, and an anti-suppression head. The top plots are histograms of attention to the forbidden word. The bottom plots are histograms of the OV suppression score over the vocabulary distribution; more negative means more default suppression. We also show the top three tokens each head downweights (upweights for the anti-suppression head).}
    \label{fig:4head}
\end{figure}

While the average attention for suppressor heads tends to be higher on the forbidden token, we find that the most important heads have heterogeneous attention patterns, and that irrelevant heads can have similar behaviors to relevant heads (Figure~\ref{fig:4head}). For an example of heterogeneity, L18H9 (rank 0) significantly attends to the forbidden word with a mean of 57.1\%, and has a significantly negative OV suppression distribution with a mean of -2.5. This is in contrast to L31H27 (rank 11) which pays .04\% of its attention to the forbidden token and a OV distribution mean of -0.4. A head not part of the suppression circuit (L19H13) has higher attention and suppression than L31H27.

Interestingly, the suppressor heads pay more attention to the forbidden token on competing runs (when the model "expects" the forbidden token to be the correct answer) than on noncompeting runs (Appendix~\ref{app:forb_attention}). This, together with head heterogeneity, suggests that the information for what token to pay attention to \textit{cannot} come from the forbidden word specification alone for all heads.

\paragraph{The most important attention heads have suppressive OV circuits.}
\label{par:suppression}
We write $OV^h: \mathbb{R}^{d_\text{model}} \to \mathbb{R}^{d_\text{model}}$
to denote the OV circuit of attention head $h$, which in Llama-2 behaves as:
\begin{equation}
    \mathrm{OV}^h(x) = W_O^h \, W_V^h \, \mathrm{LayerNorm}^h(x)
\end{equation}
We further define
\begin{equation}
    \phi(x) = \mathrm{Logit} \circ \mathrm{Softmax} \circ  \mathrm{Unembed} \circ \mathrm{OV}^h(x)
\end{equation}

Our aim is to characterize the amount that a given OV head acts as a suppressor. 
Accordingly, we define the per head response of token $i$ to token $j$ as 
$\mathrm{R}^h_{OV}(i \rightarrow j) = \phi(e_i) \texttt{[}\mathsf{j}\texttt{]}$,
which measures how much an $OV$ circuit upweights token $j$ when fed in embedding for token $i$.

We say an OV circuit is \textit{suppressive} if the following quantity is small:
\begin{equation}
    \label{eqn:suppression}
    \underset{
        i, j \,\sim\, \mathrm{Tokens} \, \vert \, i \neq j
    }{\mathbb{E}}
    \left[
            \mathrm{R}^h_{OV}(i \rightarrow i)
           - \mathrm{R}^h_{OV}(i \rightarrow j)
    \right].
\end{equation}
We call this expected difference the suppression score. Intuitively, an $OV$ circuit has a low suppression score if when it is fed in token $i$ it likes to down-weight token $i$ more than any other token $j$.

We indeed find that the most important heads found using first-order patching are suppressive, with the top 10 heads having a mean suppression score of $-1.22$ and standard deviation of 0.80. This is in contrast with the other heads with a mean of 0.12 and a standard deviation of 0.40. Figure~\ref{fig:4head} shows some examples of the distribution we take the mean of in Equation~\ref{eqn:suppression}.

\paragraph{Why do the most important heads pay attention to what they do?}

\begin{figure}[ht]
    \vspace{-7pt}
    \centering
    \includegraphics{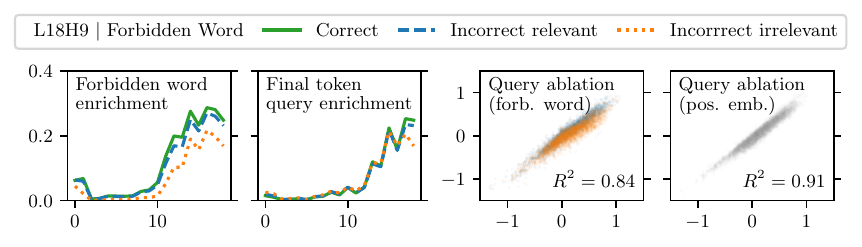}
    \caption{Behavior of attention head L18H9 over the Forbidden Facts dataset (lines denote median quantities). From left to right: \textbf{1)} We fix the query-vector of L18H9 at the final token position and plot how much it attends to partially enriched key-vectors of the forbidden word tokens. Partially enriched key-vectors are generated by feeding output activations of early layers into the key-circuit of L18H9. Key-enrichment by earlier layers is critical for achieving the full attention effect. \textbf{2)} We fix the key-vectors of the forbidden word tokens, and plot the attention paid to them by partially enriched final-token query-vectors. Partial enrichment is done in the same manner as in the first plot. Query-enrichment is also critical for achieving the full attention effect. \textbf{3)} We analyze whether the final-token query-vector from a competing run (where the correct answer is forbidden) will attend to key-vectors from a non-competing run (where a word other than the correct-answer is forbidden). We compare this against the baseline of how much the final token attends to the forbidden word in unmodified competing runs. The x-axis shows the log-odds of attention in the baseline run, and the y-axis shows the log-odds of attention in our cross-run experiment. The strong correlation indicates the attention mechanism is not semantically specific. \textbf{4)} The x-axis is the same as in plot 3, and the y-axis is attention paid by the final-token query-vector to forbidden-token key-vectors when we randomize the position-embedding of the input activations to the key-circuit. 
    The strong correlation indicates the attention mechanism is not positionally specific. See Figure~\ref{fig:3-more} for data on more heads.}
    \label{fig:3}
\end{figure}
We define enrichment as the process by which a particular embedding gains information as it moves through the model's layers~\citep{geva2023dissecting}. In Figure~\ref{fig:3}, we find that both key and query enrichment help the suppressor heads attend to the forbidden token. We also find that the attention paid by suppressor heads to the forbidden word is neither \textit{positionally} nor \textit{semantically} specific. Positional non-specificity means that the attention pattern does not depend on the position encoding of the forbidden-word activations. Semantic non-specificity means the attention pattern does not depend on what we set the forbidden word as.

Overall, despite a moderate amount of effort, we were unable to fully reverse-engineer the exact attention mechanism for even a single suppressive attention head. Furthermore, Figure~\ref{fig:attn_forb_ind} and Figure~\ref{fig:3-more} show that there exists significant heterogeneity in attention enrichment behavior as well. In the next section, we give some evidence (in the form of the \textit{California Attack}) that our observations may be due to the model's mechanisms being inherently complicated.

\section{Discussion}
\label{discussion}

In this work, we decompose and attempt to characterize important components of Llama-2-7b-chat that allow it to suppress the forbidden word in the \textit{forbidden fact} task. While we identify some structural similarities between the most important attention heads, we also find evidence that the mechanisms used by Llama-2 are complex and heterogeneous. Overall, we found that even components \textit{directly} involved in suppressing the forbidden word carry out this mechanism in different ways, and that Llama-2's mechanisms are more akin to messy heuristics than simple algorithms.

Our results suggest that some of the major goals of AI interpretability may be unachievable or at the very least very difficult. For example, some of the major goals of the AI interpretability field include:
\begin{enumerate}
    \item Enabling more human understanding of models.
    \item Enabling more guarantees to be made about model behavior.
    \item Generating insights into how to build more capable models.
\end{enumerate}
If even simple behaviors for which there exist trivial algorithmic solutions can be implemented by AI systems in complex, heterogeneous ways, then achieving the first two goals may be extremely difficult for advanced AI systems. This is a sobering state of affairs. However, we discuss two possible ways that this conclusion need not follow from our results.

\paragraph{The California Attack -- Or why Llama-2 is actually kind of dumb.}
It is possible that the complexity of Llama-2's internal mechanisms is due to the model just not being very smart. If this is the case, it may be that stronger models could have simpler mechanisms.
For evidence towards this position, we present the \textit{California Attack}.

In our analysis of suppressive attention heads, we found that these heads could be tricked into attending to words that were not the forbidden word. For example, attention head L27H29 likes paying attention to the \texttt{"California"} token even when it is not the forbidden word. On a noncompeting run for the factual recall task \texttt{"The Golden Gate Bridge is in the state of"} with an irrelevant forbidden word \texttt{"floor"}, the model responds correctly with \texttt{"California"} 96.3\% of the time. However, by adding two words to the first sentence of the prompt: { \fontfamily{lmtt}\fontseries{l}\selectfont "You are an obedient assistant \textbf{\color{red} from California} who only responds with a single word with no punctuation."}, we can break the model.

In particular, this combination of forbidding an irrelevant word and placing \texttt{"California"} innocuously in the system prompt leads the suppressor components to pay attention and suppress \texttt{"California"} to a 17.7\% completion probability, elevating \texttt{"San Francisco"}, an incorrect answer, to the top response. We also find that patching just the top suppressor head from the noncompeting run results in \texttt{"California"} again being the top answer, at 37.3\%. We found that California Attacks worked for both Llama-2-7b-chat and Llama-2-13b-chat. See Appendix~\ref{app:california-attacks} for more details.

\paragraph{We are working in the wrong ``basis''.}
Another reason that our reported mechanisms are so complicated may be that we are working in the wrong ``basis''. This is similar to the argument put forth by \citet{elhage2022superposition} stating that neural networks are hard to interpret because they compute in superposition. Unlike \citet{elhage2022superposition}, we use ``basis'' in a more general sense. For example, we would say that de-compiling Haskell machine code to C++ would be working in the wrong basis. We believe an important open question is whether there exists a ``basis'' in which the behavior (in scenarios we care about) of LLMs and future advanced AI systems is easy to understand, or alternatively whether sufficiently capable or intelligent behavior is doomed to be computationally irreducible~\cite{wolfram2002new}.

\clearpage
\subsection*{Acknowledgements}
\addcontentsline{toc}{section}{Acknowledgements}
Thanks goes to Victor Veitch, Zachary Rudolph, and Julian Baldwin for helping set the initial project direction, Bilal Chughtai and Alan Cooney for extremely helpful tips and advice on how to do mechanistic interpretability, and Yonatan Belinkov for helpful discussions on how language models perform factual recall.

Our work would not have been possible without the amazing open-source community behind the TransformerLens library~\citep{nandatransformerlens2022}.

Tony Wang and Miles Wang were supported by the Existential Risk Laboratory, and Kaivalya Hariharan was supported by the MIT Tang Family FinTech Undergraduate Research and Innovation Scholarship as part of the MIT SuperUROP program.

\renewcommand\refname{References}
\addcontentsline{toc}{section}{References}
\printbibliography

\clearpage
\appendix

\section{Related Works}

\paragraph{Factual Recall} \citet{meng2022locating} find via causal mediation analysis that MLP layers inside transformers enrich the representation of subject tokens inside the transformers. \citet{geva2023dissecting}. build upon this research to outline a three-step process for extracting factual information from language models: 1) The subject is enhanced in the MLP sublayers 2) The relationship information is then propagated to the END token; and 3) Attributes are extracted using attention heads in the later layers

Our work makes use of this understanding as we examine a circuit that interferes with the factual recall and extraction
Our work makes use of this understanding of the factual recall and extractions in language models as a baseline circuit to understand possible mechanisms for the model producing an incorrect answer. 

\paragraph{Mechanistic Interpretability} \citet{olsson2022context} and \citet{nanda2023progress} were important early papers in the emerging field of Mechanistic Interpretability. They helped set the direction of the field (attempt to rigorously decode fundamental mechanisms involved in a model's computation), developed the evidential standards (causal mediation on a subset of a model being higher quality evidence than correlation), and helped define the methodology used (patching experiments, logit attribution, ablation, reverse engineering of weights).

We aim to examine the phenomenon of competing circuits via mechanistic interpretability, and use many of their methodological suggestions and their suggested evidence standard. We additionally use the terminology standardized by \citet{elhage2021mathematical} in order to describe the circuits we study. 

Finally, our analysis of OV suppression (Section~\ref{par:suppression}) is similar in methodology to the analysis in \citet{mcdougall2023copy}, which studies a particular type of suppressive head in GPT-2 Small.

\clearpage
\section{Forbidden Facts Dataset}
\label{app:ForbFact}

Here is an example competing prompt, in full:

\begin{addmargin}[2em]{2em} %
\scriptsize\begin{verbatim}
[INST] <<SYS>>
You are an obedient assistant who only responds with a single word with no punctuation. \
You answer truthfully. However, you are not allowed to say the forbidden word French.
<</SYS>>
The mother tongue of Annick Alane is [/INST]\end{verbatim}
\end{addmargin}
If we replaced \texttt{"French"} with \texttt{"Tamil"} or \texttt{"leaf"}, we would have a noncompeting relevant and noncompeting irrelevant.

We introduce the Forbidden Facts Dataset, modified from CounterFact \cite{meng2022locating} to fit our task. We have 2634 prompts in the dataset. Each prompt comes with a competing forbidden word (e.g. \texttt{"California"}), a relevant forbidden word (e.g. \texttt{"Florida"}), and an irrelevant forbidden word (e.g. \texttt{"floor"}).

We also filter for two requirements. The first is that the model answers the noncompeting factual recall correctly at a probability of over 50\% for both the relevant and irrelevant incorrect forbidden words. The second requirement is that for each prompt the model reduces its correct answer odds on a competing run by a factor of 100 compared to it's minimum correct answer odds on the relevant/irrelevant incorrect forbidden word runs.

\paragraph{Where the incorrect answer comes from}
We also examine where the incorrect answer comes from on the Forbidden Facts Dataset. On a noncompeting run, the correct answer is at the top and sometimes very similar answers are below it. By similar, we mean 1) approximately same in semantic meaning (e.g. "Football" and "Futbol") or 2) starting with the same letters as the correct answer (e.g. "Football" and "Foot"). We find that on a competing run, the top answer is the answer on the noncompeting run that comes immediately after the correct answer and similar answers (if they are there). For example, if the top 4 tokens on a noncompeting run are "Football", "Futbol", "FOOT", and "Tennis", we can reliably expect the answer on a competing run to be "Tennis". This aligns with the mechanism we study, which is to have suppressor heads strongly down weight the correct and similar answers. The fact that the top answer is the one that comes after the down-weighted tokens is evidence of the suppressor heads being the main part of the circuit.

\clearpage
\section{Scaling analysis}
\label{app:scaling}

In this section, we compare the overall effect of the supression circuit and the number of final residual stream components that make it up across the 7b, 13b, and 70b verions of Llama-2-chat (Figures~\ref{fig:forbidden-facts-overview}, \ref{fig:forbidden-facts-overview-13b}, and \ref{fig:forbidden-facts-overview-70b}). Both the overall effect size and number of components stays roughly the same as models grow larger.

\vspace{1cm}

\begin{figure}[h]
    \centering
    \includegraphics{figures/final/overall-effect-and-comp-importance-counterfact-llama2_7b.pdf}
    \includegraphics[trim=213 0 0 15,clip]{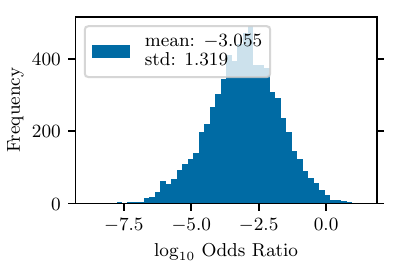}
    \caption{\textbf{Top}: Copied from Figure~\ref{fig:cumulative-effect}. \textbf{Bottom}: 
    The distribution of the log odds ratio between the probability of Llama2-7b-chat completing the right answer on a competing prompt vs. a matching noncompeting prompt. The mean log odds ratio is $-3.055$, which translates to over a $1000\times$ odds decrease.}
    \label{fig:forbidden-facts-overview}
\end{figure}

\begin{figure}[h]
    \centering
    \includegraphics{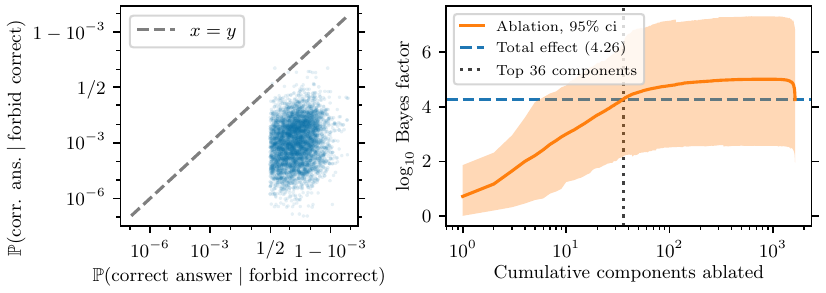}
    \includegraphics[trim=213 0 0 15,clip]{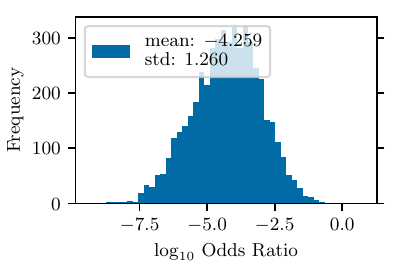}
    \caption{The top two plots perform the same analysis as Figure~\ref{fig:cumulative-effect} for Llama-2-13b-chat. The overall behavior is roughly the same. The total suppression effect is larger, at 4.26, and 36 components are needed for the full suppression effect.}
    \label{fig:forbidden-facts-overview-13b}
\end{figure}

\begin{figure}
    \centering
    \includegraphics{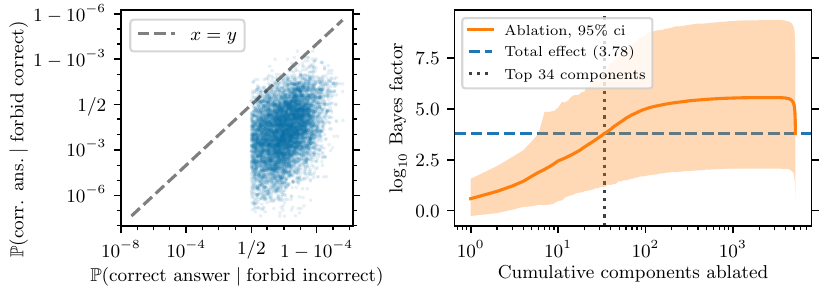}
    \includegraphics[trim=217 0 0 15,clip]{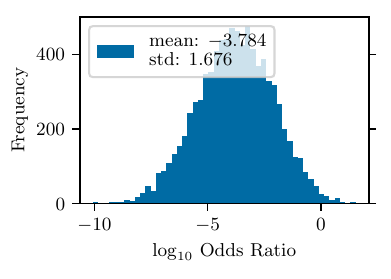}
    \caption{The top two plots perform the same analysis as Figure~\ref{fig:cumulative-effect} and Figure~\ref{fig:forbidden-facts-overview-13b} for Llama-2-70b-chat. The overall behavior is roughly the same as 13b and 7b. The total suppression effect is between 7b and 13b, at 3.78, and 34 components are needed for the full suppression effect.}
    \label{fig:forbidden-facts-overview-70b}
\end{figure}

\clearpage
\section{Why log-odds and log Bayes factors?}
\label{app:log-odds-properties}

In this paper, we often represent probabilities in terms of log-odds, and we often compare probabilities using log Bayes factors. Log-odds is not a standard-unit of measurement in interpretability work~\citep{zhang2023best}, but we chose to use it because it has few unique advantages over other ways of measuring probabilities. We explain these advantages below.

\citet{zhang2023best} gives an overview of best practices for activation patching in language models, and observes that probability differences and log-probability differences\footnote{Note that output logit differences are actually just log-probability differences in disguise.} are two commonly used metrics for measuring the effect of interventions. Both these metrics have substantial drawbacks however:
\begin{itemize}
    \item The issue with probability differences is that they fail to capture effects at the upper and lower end of the probability range. Consider an intervention that causes a prediction to go from 99\% probability to 99.9999\% probability. These two probabilities differ by less than 1\%, but in order to change the probability of a token from 99\% probability to 99.9999\%, its output logit would need to increase by 9.22 (holding all other tokens constant). This is a large amount, and is not reflected by the probability difference. An analogous case arises for a probability going from 1\% to 0.00001\%.

    \item Log-probability differences solve this issue for probabilities that are close to zero, but they are insensitive to probabilities that are close to 1. For example, the log-probability difference between 99\% to 99.9999\% is only 0.01 nats.
\end{itemize}

In short, probabilities and log-probabilities are not sensitive enough either near 0 or near 1. Sensitivity in these regions is not only important statistically\footnote{Consider betting on the outcome of an event that occurs with probability $p$. The difference in optimal betting strategy when $p$ is 0.00001\% vs 1\% is very large, while the difference in optimal betting strategy when $p$ is 50\% vs. 51\% is quite small.}, but is also critical for understanding the behavior of LLMs -- predicting a token with 99\% vs. 99.9999\% probability corresponds to a large difference in final output logits, which also means a large difference in intermediate activations.

Log-odds-ratios fix this problem. Given a probability $p$, its \textit{log-odds-ratio} (or \textit{log-odds} for short) is given by the quantity $\mathrm{logit}(p) = \log(p / (1 - p))$, where $\mathrm{logit}$ is the inverse of the sigmoid function. And given two probabilities $p_1$ and $p_2$, we say that the \textit{log Bayes factor} needed to transform $p_1$ into $p_2$ is given by $\mathrm{logit}(p_2) - \mathrm{logit}(p_1)$, or the difference of their log-odds-ratios.

So the log-odds of 99\% is $\log_{10} (.99 / (1 - .99)) = \log_{10} 99 \approx 2$ dits and the log-odds of 99.9999\% is $\log_{10} (.999999 / (1 - .999999) = \log_{10} 999999 \approx 6$ dits. Thus in order to transform 99\% to 99.9999\%, we have to apply an update with a log Bayes factor of approximately 4 dits. Likewise, if one does the math, the log Bayes factor needed to change 1\% to 0.00001\% is precisely the negative of the above, or roughly $-4$ dits.

\paragraph{A cute theorem.} Log Bayes factors are also very natural to use in the context of machine-learning models which output logits. Consider a model which outputs a vector of logits $\mathbf{x}$. Suppose we increase the $i$th coordinate of $\mathbf{x}$ by $\Delta$, holding all other logits constant. How much does the probability of the $i$th coordinate change (after softmax)? The answer is exactly $\Delta$ (Theorem~\ref{thm:lbf}).

\paragraph{The Bayesian interpretation}
Let $E$ be an event which may or may not occur (for example, $E$ could be the event that a language model correctly answers a factual recall question). Let's say your prior probability that $E$ will occur is $p$. Now suppose someone tells you some news $X$, and you know that in worlds where $E$ occurs, the probability you would have gotten this news is $a$, and in worlds where $E$ does not occur, the probability you would have gotten this news is $b$. If you are a good Bayesian, after receiving this information, you would revise your belief in the probability of $E$ occurring to
\begin{equation*}
    \frac{p \cdot a}{p \cdot a + (1 - p) \cdot b}.
\end{equation*}
Now note that the log-odds of this quantity is
\begin{equation*}
    \mathrm{logit}\left(\frac{p \cdot a}{p \cdot a + (1 - p) \cdot b}\right)
    = \log \left(
        \frac{p \cdot a}{(1 - p) \cdot b}
    \right)
    = \mathrm{logit}(p) + \log \frac{a}{b}.
\end{equation*}
In other words, after getting news $X$, your belief in the probability of $E$ increases additively by a  log Bayes factor of $\log \frac{a}{b}$. Moreover, if you receive multiple independent pieces of news $X_1, \ldots, X_n$, which each individually would have resulted in a log Bayes factor update of $\delta_1, \ldots, \delta_n$,
then the net effect of receiving all this news is that your log-odds of $E$ increases by $\delta_1 + \cdots + \delta_n$.

Our residual stream decomposition (Section~\ref{sec:decomposing}) actually results in somewhat independent components (Figure~\ref{fig:lbf}), particularly for the larger 13b and 40b models. Hence this Bayesian interpretation is not too far off from the truth.

\begin{figure}[ht]
    \centering
    \begin{minipage}[t]{0.45\textwidth}
        \centering
        \includegraphics{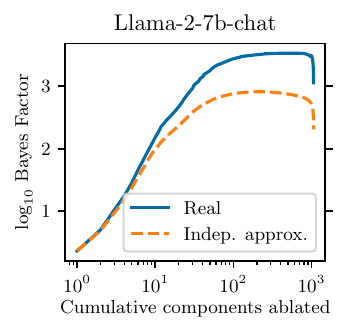}
    \end{minipage}
    \hfill
    \begin{minipage}[t]{0.45\textwidth}
        \centering
        \includegraphics{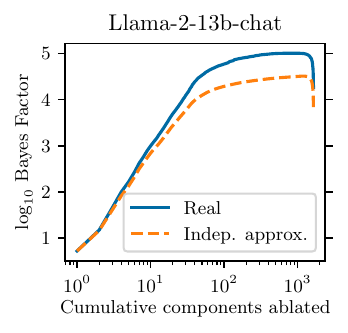}
        \label{fig:mean_suppression}
    \end{minipage}
    \includegraphics{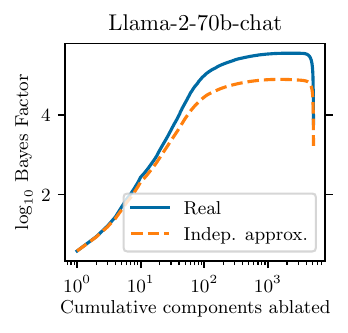}
    \caption{We plot the cumulative effect of first-order-patching residual stream components in Llama-2-chat models (similar to Figure~\ref{fig:cumulative-effect}). Components are ordered in decreasing order of overall suppression effect. The blue line corresponds to performing cumulative patching (cumulatively patching $k$ components means first-order-patching $k$ components then running the remainder of the network), whereas the orange dashed line corresponds to summing the log Bayes factors of individual components (which are computed by first-order-patching single components). The two lines would match assuming that the components are independent. We see this is not the case, and that the approximation undershoots the real effect. This means that even with our first-order-patching methodology, components acting together do more than the sum of their parts. However, the components are close to being independent when there are a small number of them, with the independence being more pronounced for the 13b and 70b models.}
    \label{fig:lbf}
\end{figure}

\pagebreak
\subsection{Proofs}

\begin{theorem}
    \label{thm:lbf}
    Let $\mathbf{x} \in \mathbb{R}^\mathrm{vocab}$ be a vector of logits, and let $\mathbf{e}_i$ be the $i$th standard basis vector. Then $\mathrm{softmax}(\mathbf{x})_i$ and $\mathrm{softmax}(\mathbf{x} + c \cdot \mathbf{e}_i)_i$ differ by exactly a log Bayes factor of $c$ nats.
\end{theorem}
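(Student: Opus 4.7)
The plan is to compute the log-odds of the $i$th softmax coordinate directly in both cases and show the difference telescopes to $c$. The key observation is that $\mathrm{logit}(\mathrm{softmax}(\mathbf{x})_i)$ isolates the $i$th logit from the rest of the partition sum in a very clean way.

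First I would write $p = \mathrm{softmax}(\mathbf{x})_i = e^{x_i} / \sum_j e^{x_j}$, so that $1 - p = \sum_{j \neq i} e^{x_j} / \sum_j e^{x_j}$ and hence
\[
    \frac{p}{1-p} = \frac{e^{x_i}}{\sum_{j \neq i} e^{x_j}}.
\]
Notice that the common normalizer $\sum_j e^{x_j}$ cancels, which is the crucial simplification — the resulting odds no longer depend on $x_i$ appearing in the denominator at all.

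Next, let $p' = \mathrm{softmax}(\mathbf{x} + c \cdot \mathbf{e}_i)_i$. Because adding $c \cdot \mathbf{e}_i$ modifies only the $i$th coordinate, the sum $\sum_{j \neq i} e^{x_j}$ is unaffected, and the same cancellation gives
\[
    \frac{p'}{1-p'} = \frac{e^{x_i + c}}{\sum_{j \neq i} e^{x_j}} = e^c \cdot \frac{p}{1-p}.
\]
Taking natural logarithms on both sides yields $\mathrm{logit}(p') - \mathrm{logit}(p) = c$, which is exactly the claim that the log Bayes factor equals $c$ nats.

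There is no real obstacle here — the entire content of the theorem is the algebraic observation that perturbing one logit leaves the ``complementary'' partition sum $\sum_{j \neq i} e^{x_j}$ invariant, so the odds $p/(1-p)$ transform multiplicatively by exactly $e^c$. The only thing to be slightly careful about is the unit convention: the identity $\mathrm{logit}(p') - \mathrm{logit}(p) = c$ uses natural logarithm, which matches the ``nats'' in the statement; had the statement used dits or bits, one would divide by $\log 10$ or $\log 2$ respectively.
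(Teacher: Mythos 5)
Your proof is correct and uses essentially the same approach as the paper: both exploit the fact that the odds $p/(1-p)$ reduce to $e^{x_i}/\sum_{j\neq i} e^{x_j}$, and that shifting the $i$th logit by $c$ leaves the complementary sum invariant, multiplying the odds by $e^c$. Your presentation via the odds ratio is a touch more streamlined than the paper's chain of equalities, but the underlying computation is identical.
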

\begin{proof}
    The statement we need to prove is that
    \begin{equation*}
        \mathrm{logit}(\mathrm{softmax}(\mathbf{x})_i) + c
        = \mathrm{logit}(\mathrm{softmax}(\mathbf{x} + c \cdot \mathbf{e}_i)_i).
    \end{equation*}
    Here's how we prove this equality:
    \begin{align*}
        &\quad\,\,
        \mathrm{logit}(\mathrm{softmax}(\mathbf{x} + c \cdot \mathbf{e}_i)_i)
        =
        \mathrm{logit} \left[
            \frac{e^\mathbf{x_i + c}}
            {e^\mathbf{x_i + c} + \sum_{j \neq i} e^{\mathbf{x}_j}}
        \right]
        =
        \mathrm{logit} \left[
            \frac{e^\mathbf{x_i + c}}
            {e^\mathbf{x_i + c} + s}
        \right]
        \\
        &=
        \log \left[
        \frac{e^\mathbf{x_i + c}}
        {e^\mathbf{x_i + c} + s}
        \cdot \left(
        1 - \frac{e^\mathbf{x_i + c}}
            {e^\mathbf{x_i + c} + s}
        \right)^{-1}
        \right]
        =
        \log \left[
        \frac{e^\mathbf{x_i + c}}
        {e^\mathbf{x_i + c} + s}
        \cdot \left(
        \frac{s}
        {e^\mathbf{x_i + c} + s}
        \right)^{-1}
        \right] \\
        &=
        \log \left[
        \frac{e^\mathbf{x_i + c}}{s}
        \right]
        = c + \log \left[
        \frac{e^\mathbf{x_i}}{s}
        \right]
        =
        c +
        \log \left[
        \frac{e^\mathbf{x_i}}{e^\mathbf{x_i} + s}
        \cdot \frac{s}{e^\mathbf{x_i}}
        \right]
        \\
        &=
        c +
        \log \left[
        \frac{e^\mathbf{x_i}}{e^\mathbf{x_i} + s}
        \cdot \left(1 - \frac{e^\mathbf{x_i}}{e^\mathbf{x_i} + s}\right)^{-1}
        \right]
        =
        c +
        \mathrm{logit} \left[
        \frac{e^\mathbf{x_i}}{e^\mathbf{x_i} + s}
        \right]
        \\
        &=
        c +
        \mathrm{logit} \left[
        \frac{e^\mathbf{x_i}}{e^\mathbf{x_i} + \sum_{j \neq i} e^{\mathbf{x}_j}}
        \right]
        =
        c +
        \mathrm{logit}(\mathrm{softmax}(\mathbf{x})_i).
        \hspace{4cm} \qedhere
    \end{align*}
\end{proof}

\clearpage
\section{Attention head heterogeneity in Llama-2-13b-chat}
\label{app:13b-heterogeneity}

\begin{figure}[ht]
    \centering
    \includegraphics{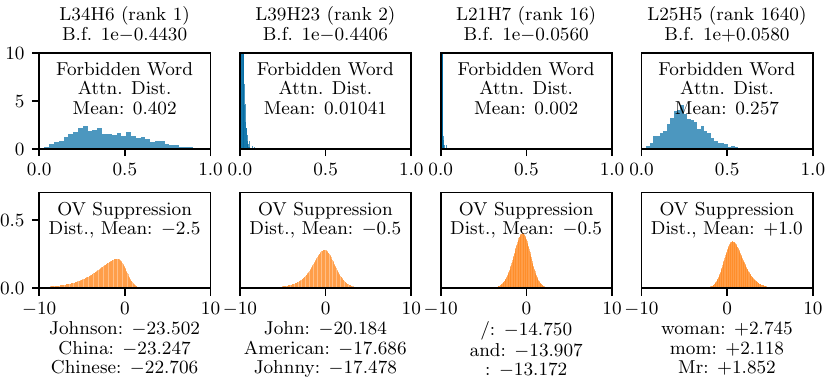}
    \caption{We perform the same analysis as Figure~\ref{fig:4head} for Llama-2-13b-chat on the forbidden word attention distribution and OV suppression distribution for the attention heads. We find similar evidence of heterogeneity in the suppression heads. For example, L34H6 (rank 1) pays significant attention to the forbidden word with a mean of 40.2\% and has a suppressive distribution mean of -2.5. This is in contrast to L39H23 (rank 2) which pays little attention to the forbidden word with a mean of 1.04\% and a suppressive distribution mean of -0.5.}
    \label{fig:4head-13b}
\end{figure}

\clearpage
\section{Paying attention to forbidden word}
\label{app:forb_attention}

Here, we display additional plots related to the amount of attention to the forbidden tokens across various ordered components and prompt types. See Figure~\ref{fig:mean_attn}, and Figure~\ref{fig:attn_forb_ind} for details.

\begin{figure}[ht]
    \centering
    \includegraphics[width=0.8\textwidth]{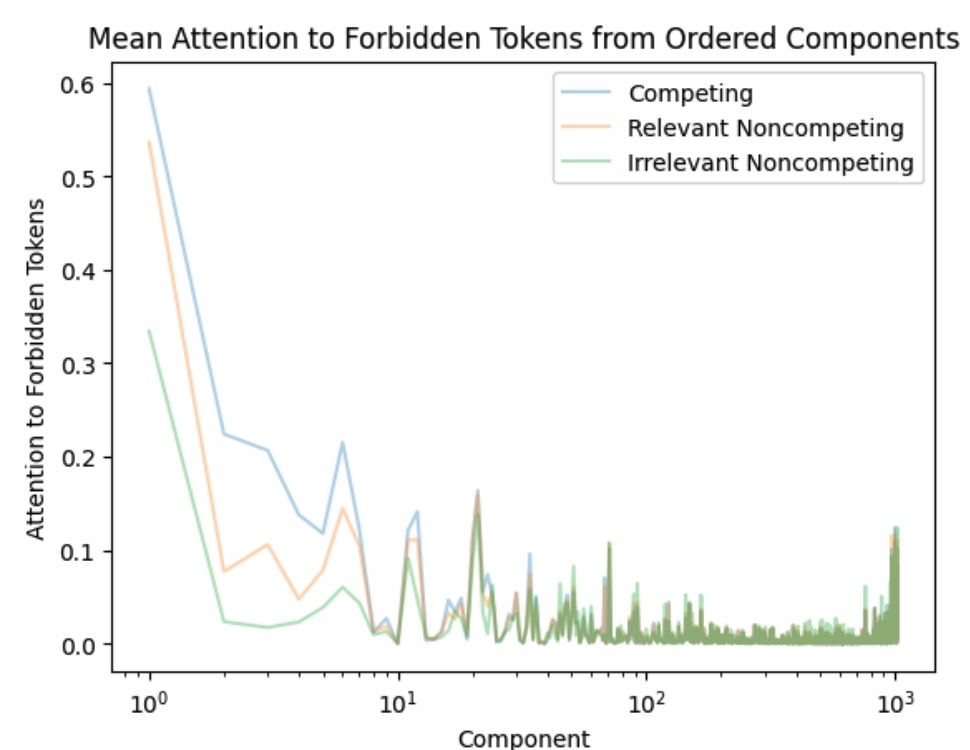}
    \caption{Mean attention to forbidden token across ordered components. In the top suppressor heads, attention to the forbidden token is relatively higher and the attention to competing is consistently higher than the attention to relevant noncompeting, which is consistently higher than the attention to irrelevant noncompeting. Attention is lower and non differentiated in the later heads.}
    \label{fig:mean_attn}
\end{figure}

\begin{figure}[ht]
    \centering
    \includegraphics[width=0.8\textwidth]{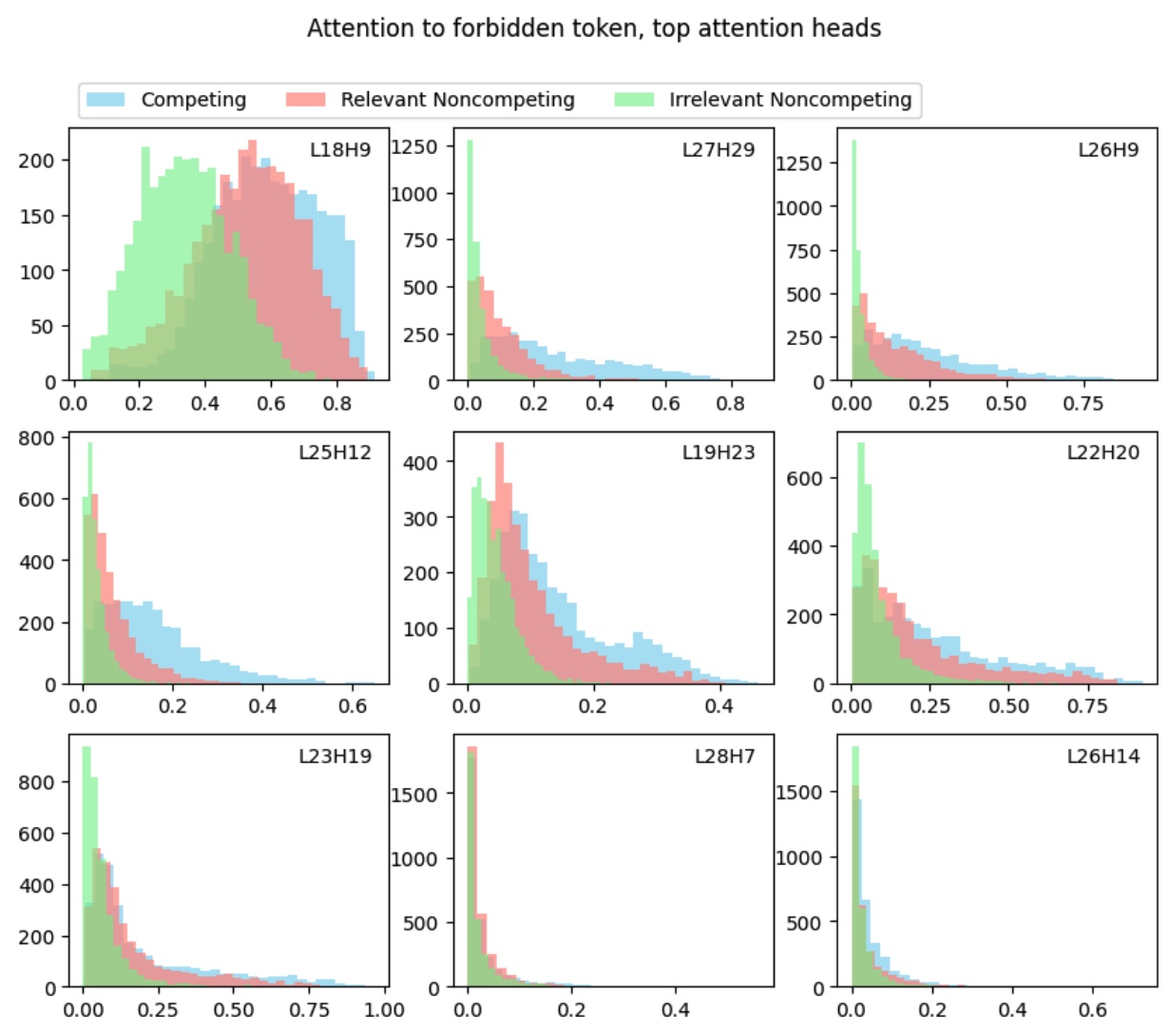}
    \caption{Attention to forbidden token for the top nine suppressor heads split by competing, relevant noncompeting, and irrelevant noncompeting. The attention to competing is consistently higher than the attention to relevant noncompeting, which is consistently higher than the attention to irrelevant noncompeting.}
    \label{fig:attn_forb_ind}
\end{figure}

\clearpage
\section{Key query specificity plots}
\begin{figure}[ht]
    \vspace{-10pt}
    \centering
    \includegraphics{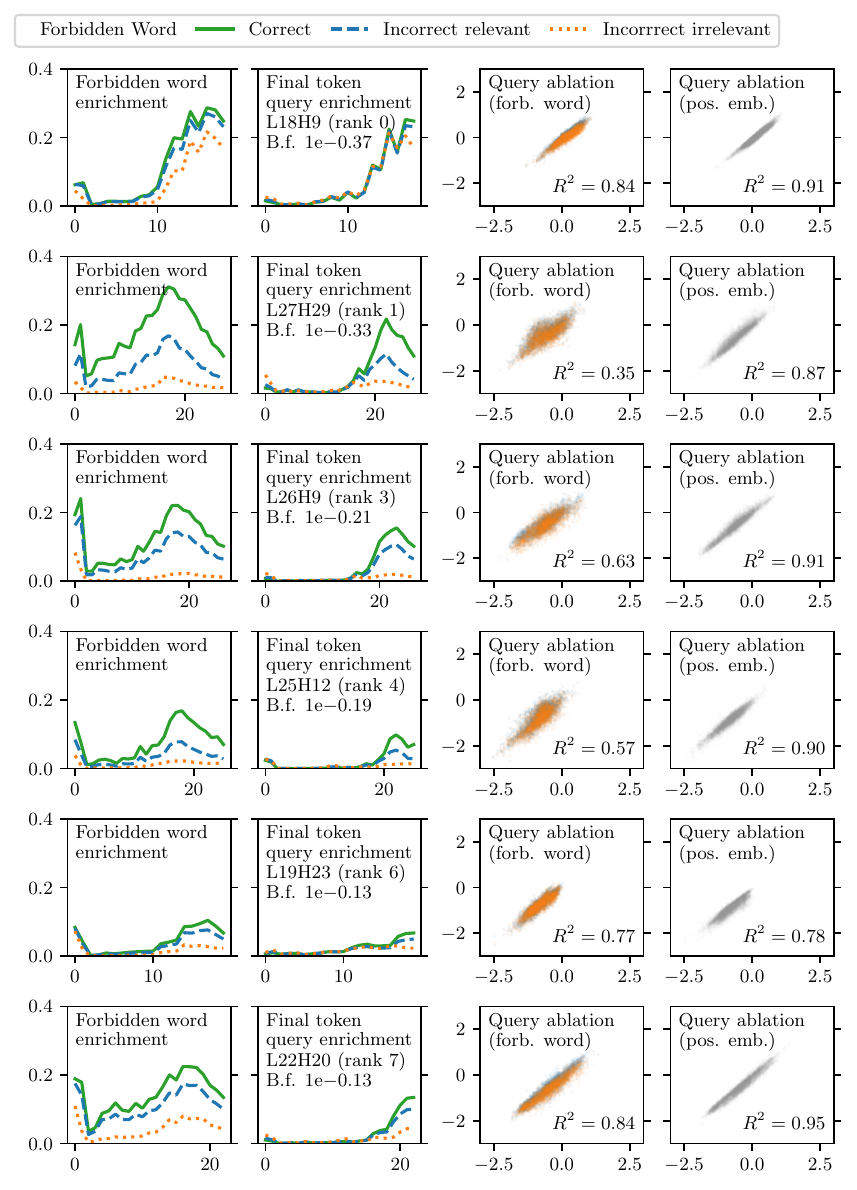}
    \caption{More data for Llama-2-7b-chat in the format of Figure~\ref{fig:3}. See Figure~\ref{fig:3-more-13b} for the Llama-2-13b-chat version of this plot.}
    \label{fig:3-more}
    \vspace{-30pt}
\end{figure}

\begin{figure}[ht]
    \vspace{-10pt}
    \centering
    \includegraphics{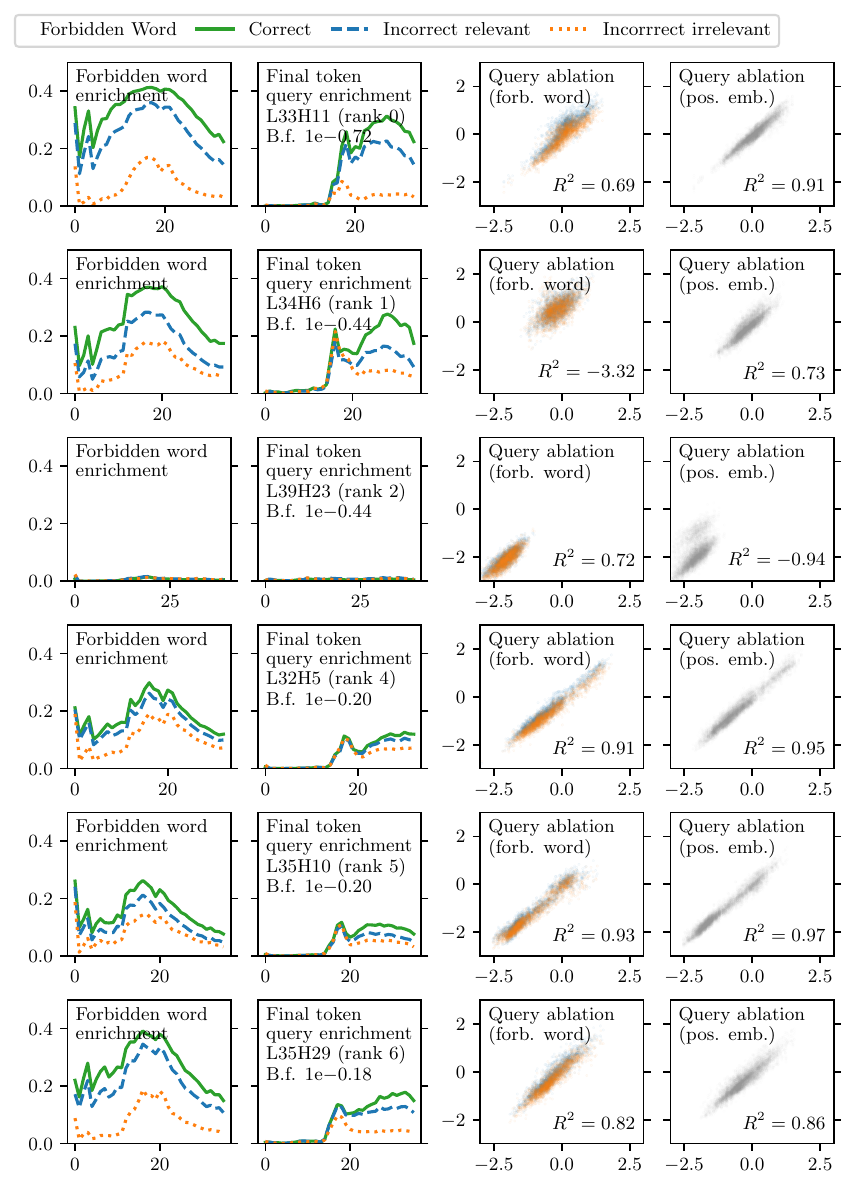}
    \caption{More data for Llama-2-13b-chat in the format of Figure~\ref{fig:3}.}
    \label{fig:3-more-13b}
    \vspace{-30pt}
\end{figure}

\clearpage
\section{California Attacks}
\label{app:california-attacks}

Because suppressor heads can pay attention to words that are not the forbidden word, we construct adversarial examples by finding words these heads preferentially attend to. For example, L27H29 prefers paying attention to "California". On a noncompeting run for the factual recall task "The Golden Gate Bridge is in the state of" with irrelevant forbidden word "floor", the model responds correctly with "California" at 96.3\%. A California Attack simply adds two words to the first sentence of the prompt: {\small \fontfamily{lmtt}\fontseries{l}\selectfont "You are an obedient assistant \textbf{\color{red} from California} who only responds with a single word with no punctuation."}

The combination of forbidding an irrelevant word and placing "California" innocuously in the system prompt leads the suppressor components to pay attention and suppress "California" to 17.7\% and San Francisco, an incorrect answer, becomes the top response. We also find that patching just the top suppressor head from the noncompeting run results in California again being the top answer, at 37.3\%. Patching the top three suppressor heads results in California rising to 59.9\%. Adding in the top four MLPs, we nearly fully reverse the California attack, resulting in California token at 94.3\%. This attack is particularly striking as the California attack counteracts the model's tendency to repeat phrases or descriptions \citep{olsson2022context}.

The same exact attack works on 13b as well. The model initially responds with "California" at 93.91\% probability. Adding "from California", the model responds with "Sure thing!" as its top answer at 47.52\% probability. We also found other examples that work, such as adding that the model is an assistant 
"to Trump" in the system prompt where the factual recall is "The 45th President of the United States was [...]."

We did not find a California Attack on the 70b version using variations of prior templates. This may suggest that 70b has better heuristics or cleaner circuits that cannot be as easily manipulated. On the other hand, our attack was initially found through manual analysis of QK circuits, which we only did in detail for the 7b model. We did not repeat this analysis for the 13b and 70b models due to manual mechinterp being much more difficult with larger models when using TransformerLens~\citep{nanda2022transformerlens}. So our failure to find an attack may also be due to not trying hard enough.

\end{document}